\documentclass{article}

\usepackage{microtype}
\usepackage{graphicx}
\usepackage{subcaption}
\usepackage{booktabs} 

\usepackage{hyperref}

\usepackage[accepted]{icml2026}

\usepackage{amsmath}
\usepackage{amssymb}
\usepackage{mathtools}
\usepackage{amsthm}
\usepackage{xspace}

\usepackage[capitalize,noabbrev]{cleveref}

\theoremstyle{plain}
\newtheorem{theorem}{Theorem}[section]

\newtheorem{lemma}[theorem]{Lemma}

\theoremstyle{definition}
\newtheorem{definition}[theorem]{Definition}
\newtheorem{assumption}[theorem]{Assumption}
\theoremstyle{remark}

\usepackage[disable,textsize=tiny]{todonotes}
\newcommand{\method}[0]{\textsc{Cure}\xspace}
\newcommand{\methodlong}[0]{\underline{C}ontext management via \underline{U}ncertainty-aware admission and \underline{R}edundancy-aware \underline{E}viction\xspace}

\icmltitlerunning{Bounded Context Management for Tabular Foundation Models on Stream Learning}

\begin{document}

\twocolumn[
  \icmltitle{Bounded Context Management for \\ Tabular Foundation Models on Stream Learning}
    
  \icmlsetsymbol{equal}{*}

  \begin{icmlauthorlist}
    \icmlauthor{Jinmo Lee}{snu}
    \icmlauthor{Doyun Choi}{snu}
    \icmlauthor{Moongi Choi}{kaist}
    \icmlauthor{Jaemin Yoo}{snu}
  \end{icmlauthorlist}
  
  \icmlaffiliation{snu}{Department of Computer Science and Engineering, Seoul National University, Seoul, Republic of Korea}
  \icmlaffiliation{kaist}{The Kim Jaechul Graduate School of AI, KAIST, Daejeon, Republic of Korea}

  \icmlcorrespondingauthor{Jinmo Lee}{jinmo.lee@snu.ac.kr}
  \icmlcorrespondingauthor{Jaemin Yoo}{jaeminyoo@snu.ac.kr}

  \icmlkeywords{Machine Learning, ICML}

  \vskip 0.3in
]

\printAffiliationsAndNotice{} 

\begin{abstract}
  Tabular stream learning requires predictions on sequentially arriving examples under distribution shift. While standard methods adapt by updating model states, tabular foundation models (TFMs) make predictions conditioned on a labeled context in an in-context manner, making them a natural alternative for stream learning. This shifts the challenge from how to update the model to how to manage the context. We propose a future-information view that yields three practical requirements for context management: preserve recent examples, retain uncertain examples, and remove redundant examples. We instantiate these requirements as \method (\methodlong), a context-managing policy with entropy-gated admission and redundancy-aware eviction. Across seven streams, \method shows up to $27.0\%$ relative improvement over classical stream learners, remains robust across multiple TFM backbones, and ranks first among other policy variants. Code and datasets are available \href{https://github.com/morcellinus/CURE-ICML-FMSD/tree/main}{here}.

\end{abstract}

\section{Introduction}
\label{sec:introduction}

Tabular stream learning studies supervised prediction from sequentially arriving examples under bounded memory, real-time response requirements, and potential distribution shift~\citep{aggarwal2007data,nguyen2015survey, korycki2022instance,gama2014survey}. Standard stream learners address this through online or incremental model-state adaptation, such as updating tree statistics or ensemble members
~\citep{domingos2000mining,gomes2017adaptive,gomes2019streaming}.

Recent tabular foundation models (TFMs) offer a different paradigm for tabular prediction. Given a labeled context $D$ and a query $x$, TFMs directly output a posterior predictive distribution $q_\theta(\cdot\mid x,D)$ without dataset-specific model updates~\citep{hollmann2022tabpfn,hollmann2025accurate}. This in-context mechanism makes TFMs appealing for data streams since a model can adapt by changing the retained context~\citep{lourencco2025bridging}. Therefore, the core challenge shifts from how to update the model to how to manage a context under bounded memory. A recent method,  DualFIFO~\citep{lourencco2026context}, has shown that first-in-first-out context updates can make TFMs competitive on streams, but it remains unclear which past examples should be retained for future queries.

Motivated by this gap, we introduce a \emph{future-information view} that measures how much information a context provides to near-future queries. This yields three requirements for a context management policy: preserve recent examples to approximate the near-future distribution, retain uncertain examples with high potential label information, and remove redundant examples with overlapping evidence. 

We implement these requirements as \method (\methodlong), a context update policy with entropy-gated admission and redundancy-aware eviction. Across seven streams, \method achieves the best prequential accuracy over classical stream-learning baselines, improving by up to $+19.59$ points, shows consistent gains across multiple TFM backbones, and ranks first among controlled policy variants from the same design space.
\section{Problem Setup}
\label{sec:problem_setup}

In this paper, we consider stream classification under the test-then-train prequential protocol~\citep{gama2009issues,gama2013evaluating}. At each step $t$, a
query $x_t$ arrives and the learner must predict before the true label $y_t$ is observed. After $y_t$ is revealed, the new labeled example $z_t=(x_t,y_t)$ becomes available for future predictions.

When a TFM is applied on data streams, the model outputs $q_\theta(\cdot\mid x_t, D_t)$ given a query $x_t$ and bounded context $D_t$ of previously observed feature-label pairs, while $\theta$ is fixed throughout the stream. Thus, adaptation to evolving data can occur only by changing the retained context $D_t$.

We define this task as \emph{bounded context management}. Given a pretrained TFM, a prequential stream $\{(x_t,y_t)\}_{t\ge 1}$, and a context budget $B$, the goal is to design an online update policy
$\mathcal U$ that selects the next context using only information available up to time $t$:
\begin{equation}
    D_{t+1}
    =
    \mathcal U(D_t,z_t),
    \qquad
    |D_{t+1}|\le B .
    \label{eq:context_update}
\end{equation}
As stated earlier, one notable prior method DualFIFO~\citep{lourencco2026context} shows that TFMs can be effective stream learners, but this does not provide an explicit criterion for the utility of retained examples. This motivates the future-information view introduced in Section~\ref{sec:future_utility}.


\section{A Future-Information View of Bounded Context}
\label{sec:future_utility}

A key question in bounded context management is deciding which examples should be kept and which should be removed. We formalize this using the
\emph{near-future information} provided by a context. All supporting assumptions, lemmas, and theorems are stated and proved in Appendix~\ref{app:theory}.

\paragraph{Future-Information View.}
Fix a stream step $t$ and a short horizon length $h$. Let $\mathcal H_t^+=\{t+1,\ldots,t+h\}$ and define the near-future feature distribution as
$\textstyle \mathcal P_{t,X}^+ = \frac{1}{h}\sum_{s\in\mathcal H_t^+}\mathcal P_{s,X}$,
where $\mathcal P_{s,X}$ is the feature marginal at step $s$. For a near-future query $x'\sim\mathcal P_{t,X}^+$, let $Y_{x'}$ be its label random variable.

\begin{definition}[Future usefulness of a bounded context]
\label{def:future_information}
For a current context $D_t$, we define its future
usefulness as
\begin{equation}
    \mathcal J_t(D_t)
    =
    \mathbb E_{x'\sim\mathcal P_{t,X}^+}
    \left[
        I(D_t;Y_{x'}\mid x')
    \right].
    \label{eq:future_information_objective}
\end{equation}
\end{definition}
Here, $I$ denotes pointwise mutual information evaluated at each $x'$. This says that a context is useful if it provides information about near-future query labels. However, it cannot be used as an online policy since $\mathcal P_{t,X}^+$ is unknown, and $\mathcal J_t$ is for the whole context rather than individual samples. We therefore decompose it into three practical signals.

\paragraph{Recency.}
The outer expectation in Eq.~\eqref{eq:future_information_objective} depends on the unknown near-future feature distribution.
Since recent examples often reflect the current data-generating concept~\citep{losing2017self,chen2020selective}, we use the recent window as a proxy for near-future query regions.
Assumption~\ref{assump:recent_window_stability} and
Lemma~\ref{lem:recency_proxy} further formalize this, and show that the information measured on recent examples approximates the near-future information up to a distributional discrepancy term. This motivates reserving part of the context budget for recent examples.


\paragraph{Uncertainty.}
Recency tells us where future queries may appear, but not which examples are worth keeping beyond the recent window. For a newly observed
example $z_t=(x_t,y_t)$, the ideal admission criterion is its marginal
contribution
\begin{equation}
    \Delta_t(z_t\mid D_t)
    =
    \mathcal J_t(D_t\cup\{z_t\})
    -
    \mathcal J_t(D_t).
    \label{eq:marginal_info_gain}
\end{equation}
This asks how much adding $z_t$ increases the future information of the
context. Lemma~\ref{lem:marginal_gain_identity} rewrites this as 
\begin{equation}
\Delta_t(z_t\mid D_t)
    =\mathbb E_{x'\sim\mathcal P_{t,X}^+}\left[I(Y_{x_t};Y_{x'}\mid x_t,x',D_t)\right], 
\end{equation}
meaning that $z_t$ is useful when its label helps predict future labels. 
Moreover, Theorem~\ref{thm:entropy_local_utility} shows that, under Assumptions~\ref{assump:local_entropy_consistency} and~\ref{assump:local_label_coherence}, prediction-time uncertainty $H(Y_{x_t}\mid x_t,D_t)$ provides a tractable lower-bound signal for the local future-information gain of $z_t$, up to the locality and coherence errors $\delta$ and $\epsilon$.
This motivates using predictive entropy as the admission signal for the context.

\paragraph{Redundancy.}
When the context is full, the policy must remove a stored example. The ideal policy would delete the item that causes the smallest loss in future
information:
\begin{equation}
    u^\ast
    \in
    \arg\min_{u\in D_t}
    \left[
        \mathcal J_t(D_t)-\mathcal J_t(D_t\setminus\{u\})
    \right].
    \label{eq:ideal_information_eviction}
\end{equation}
This is impractical because it requires leave-one-out TFM evaluations for all stored examples. We therefore use redundancy as a proxy for low
information loss. If two same-class examples are close in representation
space, they are likely to provide overlapping evidence for future labels, so removing one of them should lose little information. This motivates removing close same-class examples, with the formal redundancy condition given in Assumption~\ref{assump:local_label_redundancy}.
\section{CURE: Context Management for Streaming TFMs}
\label{sec:method}

\begin{figure*}[t]
    \centering
    \includegraphics[width=\textwidth]{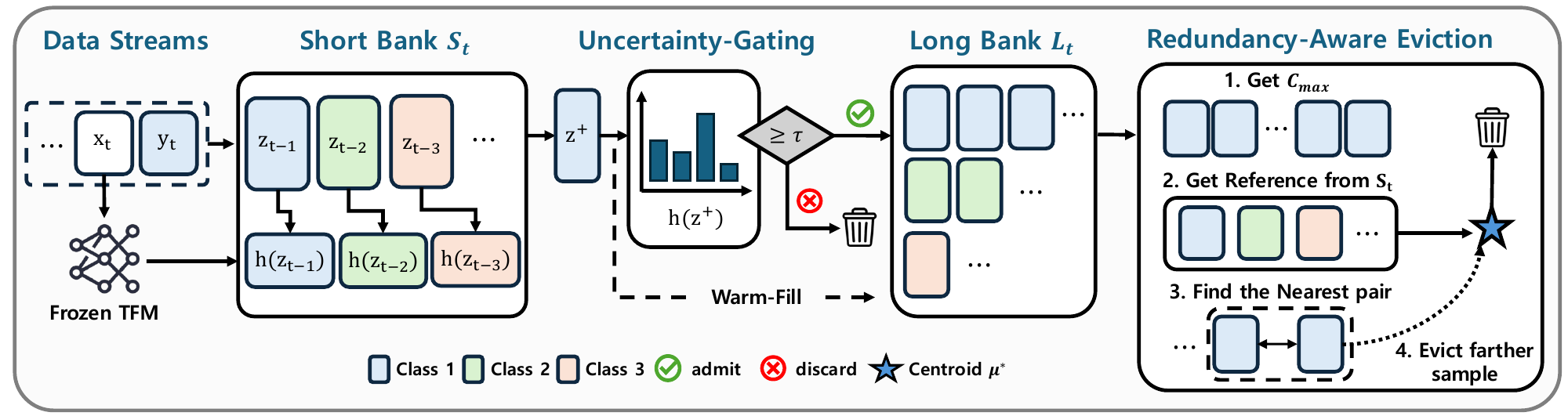}
    \caption{
    Overview of \method. A new labeled example $z_t=(x_t,y_t)$ first
    enters the short bank $S_t$ to preserve recent support. When $S_t$
    overflows, the oldest item $z^+$ becomes a long-bank candidate and is
    admitted according to its stored prediction-time entropy. When the
    long bank $L_t$ exceeds its budget, \method removes a locally
    redundant same-class example.
    }
    \label{fig:cure_architecture}
\end{figure*}

Motivated by the three signals from the future-information view, we
introduce \method (\methodlong), a context management policy for stream learning with TFMs. As depicted in Figure~\ref{fig:cure_architecture}, \method allocates part of the context budget to a short bank and the remaining budget to a long bank. New labeled examples first enter the short bank. When the short bank overflows, its oldest item becomes a candidate for long-term retention and is filtered by uncertainty-gated admission. When the long bank exceeds its budget, a redundant same-class example is removed.

\subsection{Dual-Bank Context}

The recency signal suggests that a context should preserve recent examples as a proxy for near-future query regions. To implement this, \method maintains
\begin{equation}
    D_t=S_t\cup L_t,
    \;
    |S_t|\le B_S,\; |L_t|\le B_L,\; B_S+B_L=B,
\end{equation}
where $S_t$ is a FIFO short bank and $L_t$ is a long bank. The short bank
contains the most recent labeled examples as a sliding window, while the
long bank stores older examples selected by uncertainty-gated admission
and redundancy-aware eviction. Thus, every example remains available for
at least the short-bank duration.

\subsection{Entropy-Gated Long-Bank Admission}

The uncertainty signal suggests that high-entropy examples can provide useful label information once revealed. We apply this signal to the long bank.

When an example $z_t=(x_t,y_t)$ is first predicted, the TFM outputs
$p_t=q_\theta(\cdot\mid x_t,D_t)$ before observing $y_t$. We attach to
$z_t$ its normalized predictive entropy
\begin{equation}
    h(z_t)
    =
    \frac{-\sum_{c=1}^{C}p_t(c)\log p_t(c)}{\log C},
    \label{eq:stored_entropy}
\end{equation}
where $C$ is the number of classes. This score is computed once at
prediction time and reused when the example later leaves the short bank.

Let $z^+=(x^+,y^+)$ be the item that overflows from $S_t$. During
\emph{warm-fill}, candidates are inserted into $L_t$ until the long bank is full. Afterward, a candidate is admitted only if its stored entropy exceeds a
threshold $\tau$:
\begin{equation}
    \mathrm{Admit}(z^+)
    =
    \begin{cases}
    1, & |L_t|<B_L,\\
    \mathbb I\{h(z^+)\ge \tau\}, & |L_t|\ge B_L .
    \end{cases}
    \label{eq:generic_admission_rule}
\end{equation}
Thus, the long bank retains examples that were uncertain
when first observed and are more likely to help future queries.

\subsection{Redundancy-Aware Long-Bank Eviction}

The redundancy signal suggests that close same-class examples provide
overlapping evidence, so one can be removed with limited information loss. We again apply this signal only inside the long bank.

When $L_t$ exceeds its budget, \method first selects the most represented
class $c_{\max}
    \in
    \arg\max_c |\{z\in L_t:y_z=c\}|$,
which reduces the risk of deleting sparse class evidence. Within this
class, \method finds the closest same-class pair in the normalized
raw-feature representation $\phi(x)$:
\begin{equation}
    (i^\ast,j^\ast)
    \in
    \arg\min_{\substack{i\ne j\\ z_i,z_j\in L_t\\ y_i=y_j=c_{\max}}}
    \|\phi(x_i)-\phi(x_j)\|_2 .
    \label{eq:same_class_pair}
\end{equation}
This pair is treated as the most redundant local evidence in the overrepresented class. To choose which endpoint to remove from $x_{i^*}$ and $x_{j^*}$, \method uses the recent short bank as a reference for the current regime. It computes a recent centroid for class $c_{\max}$ and removes the endpoint farther from it. Full procedural details are provided in Appendix~\ref{app:algorithm_details}.
\section{Experiments}
\label{sec:experiments}

\begin{table*}[t]
\centering
\caption{Prequential accuracy on data streams. Values in parentheses next to \method indicate absolute gains over the best classical baseline and $\pm$ denotes standard deviation over five seeds. Best and second-best results are bolded and underlined.}
\label{tab:main_results}
\vspace{0.25em}
\resizebox{0.88\textwidth}{!}{
\footnotesize
\setlength{\tabcolsep}{3.5pt}
\renewcommand{\arraystretch}{1.05}
\begin{tabular}{@{}llcccccc@{}}
\toprule
Dataset & \method{} & ARF & BOLE & LevBag & SRP & EFDT & VFDT \\
\midrule
NOAA    
& $\mathbf{81.94}_{\scriptscriptstyle(+2.51)}$ 
& $79.42{\pm}0.08$ 
& $74.97{\pm}0.19$ 
& $79.40{\pm}0.10$ 
& $\underline{79.43{\pm}0.10}$ 
& $75.56$ 
& $75.66$ \\

METER   
& $\mathbf{90.80}_{\scriptscriptstyle(+19.31)}$ 
& $68.27{\pm}0.09$ 
& $68.06{\pm}0.27$ 
& $69.40{\pm}0.25$ 
& $\underline{71.49{\pm}0.18}$ 
& $61.39$ 
& $55.44$ \\

RIALTO  
& $\mathbf{92.04}_{\scriptscriptstyle(+19.59)}$ 
& $63.40{\pm}0.12$ 
& $53.62{\pm}0.32$ 
& $64.13{\pm}0.18$ 
& $\underline{72.45{\pm}0.15}$ 
& $58.73$ 
& $40.46$ \\

POSTURE-No8
& $\mathbf{62.10}_{\scriptscriptstyle(+1.58)}$ 
& $59.44{\pm}0.08$ 
& $54.60{\pm}0.38$ 
& $\underline{60.52{\pm}0.04}$ 
& $58.88{\pm}0.15$ 
& $55.34$ 
& $53.11$ \\

POKER   
& $\mathbf{99.60}_{\scriptscriptstyle(+2.07)}$ 
& $84.05{\pm}0.11$ 
& $88.46{\pm}0.19$ 
& $\underline{97.53{\pm}0.10}$ 
& $89.29{\pm}0.16$ 
& $82.75$ 
& $89.09$ \\

NOMAO   
& $\mathbf{97.87}_{\scriptscriptstyle(+0.45)}$ 
& $97.19{\pm}0.04$ 
& $95.82{\pm}0.12$ 
& $97.27{\pm}0.03$ 
& $\underline{97.42{\pm}0.04}$ 
& $94.55$ 
& $93.98$ \\

AGR(A)  
& $\mathbf{90.93}_{\scriptscriptstyle(+0.67)}$ 
& $\underline{90.26{\pm}0.08}$ 
& $88.29{\pm}0.10$ 
& $87.62{\pm}0.29$ 
& $90.13{\pm}0.15$ 
& $75.15$ 
& $76.32$ \\

\midrule
Avg. rank    
& $\mathbf{1.00}$ 
& $3.71$ 
& $5.43$ 
& $3.14$ 
& $\underline{2.57}$ 
& $6.00$ 
& $6.14$ \\
\bottomrule
\end{tabular}
}
\vspace{-0.5em}
\end{table*}
\begin{table}[t]
\centering
\caption{Backbone robustness across data streams. Values compare \method{} with DualFIFO using the same TFM backbone. \method{} improves in 17 of 18 comparisons.}
\label{tab:robustness}
\vspace{0.25em}
\resizebox{0.49\textwidth}{!}{
\footnotesize
\setlength{\tabcolsep}{3.5pt}
\renewcommand{\arraystretch}{1.05}
\begin{tabular}{@{}lcccccc@{}}
\toprule
& \multicolumn{2}{c}{LimiX-v1}
& \multicolumn{2}{c}{TabPFN-v2.5}
& \multicolumn{2}{c}{TabDPT-v1} \\
\cmidrule(lr){2-3}
\cmidrule(lr){4-5}
\cmidrule(lr){6-7}
Dataset
& \method{} & DualFIFO
& \method{} & DualFIFO
& \method{} & DualFIFO \\
\midrule
NOAA   
& $\mathbf{81.43}$ & $80.83$
& $\mathbf{81.82}$ & $81.26$
& $\mathbf{81.94}$ & $81.49$ \\

METER  
& $\mathbf{89.06}$ & $88.69$
& $\mathbf{87.88}$ & $87.64$
& $\mathbf{80.73}$ & $79.65$ \\

RIALTO 
& $\mathbf{90.17}$ & $89.59$
& $\mathbf{89.53}$ & $88.93$
& $\mathbf{88.05}$ & $87.64$ \\

POSTURE-No8
& $\mathbf{60.50}$ & $60.47$
& $60.83$ & $\mathbf{60.99}$
& $\mathbf{61.67}$ & $61.50$ \\

NOMAO
& $\mathbf{97.83}$ & $97.54$
& $\mathbf{97.85}$ & $97.55$
& $\mathbf{97.34}$ & $96.58$ \\

AGR(A)
& $\mathbf{90.56}$ & $90.54$
& $\mathbf{91.02}$ & $90.85$
& $\mathbf{90.02}$ & $89.93$ \\
\bottomrule
\end{tabular}
}
\vspace{-0.5em}
\end{table}

To evaluate whether \method provides an effective context management protocol for streaming TFMs, we ask two questions. First, can bounded context management make TFMs competitive with classical stream learning methods? Second, does the policy transfer across different TFM backbones?

\subsection{Experimental Setup}

\paragraph{Datasets.}
We evaluate on seven streams: five real-world streams from the USP Data
Stream repository\footnote{\url{https://sites.google.com/view/uspdsrepository}}
(NOAA, METER, RIALTO, POSTURE-No8, POKER), one additional real-world
stream (NOMAO), and one synthetic stream AGR(A)~\citep{agrawal1993mining}.
POSTURE-No8 is a 10-class variant of POSTURE after removing the
rarest class. Dataset statistics are provided in
Appendix~\ref{app:datasets}.

\paragraph{Models and baselines.}
We use TabICL-v2~\citep{qu2026tabiclv2} as the primary backbone. We compare \method against representative stream-learning baselines implemented in MOA 24.07~\citep{bifet2010moa}: VFDT, EFDT, ARF, SRP, LevBag, and BOLE~\citep{domingos2000mining,
manapragada2018extremely,gomes2017adaptive,gomes2019streaming,
bifet2010leveraging,de2016boosting}. To test transferability across TFMs, we also evaluate LimiX-v1, TabPFN-v2.5, and TabDPT-v1~\citep{zhang2025limix,grinsztajn2025tabpfn,ma2024tabdpt}.
Model and baseline details are given in Appendix~\ref{app:backbones} and~\ref{app:baselines}.

\paragraph{Evaluation protocol.}
TFM policies use a total context budget of $B=1000$, a short-bank ratio of $\rho=0.75$, and a warm-up period of 100 stream steps. We update the
context after every arriving labeled example and report cumulative
prequential accuracy. Implementation details including all hyperparameters are provided in
Appendix~\ref{app:evaluation_details}.

\subsection{Main Results}

Table~\ref{tab:main_results} compares \method with TabICL-v2 against
classical stream-learning baselines. \method achieves the best
prequential accuracy on all seven streams and obtains the best average
rank. Its gains range from $+0.45$ points on NOMAO to $+19.31$ on METER and $+19.59$ on RIALTO. This corresponds to up to $27.0\%$ improvement and $9.0\%$ average
relative improvement across datasets.

These results show that a TFM with bounded context management can be a strong stream learner compared with online tree and ensemble methods. This is notable because \method adapts to evolving streams only by updating the retained context. 

\subsection{Backbone Transferability}

Table~\ref{tab:robustness} evaluates whether the gains of \method transfer across multiple TFM backbones. This matters because bounded context management operates at the input-context level and should not rely on a backbone-specific architecture.

Across LimiX-v1, TabPFN-v2.5, and TabDPT-v1, \method improves over the prior method DualFIFO~\citep{lourencco2026context} in 17 of 18 comparisons. The gains are modest on AGR(A) and POSTURE-No8, but larger on NOAA, METER, RIALTO, and NOMAO. This indicates that the benefit is not specific to one backbone, and supports that \method is a transferable interface for stream learning with TFMs.

\subsection{Design-Space Ablation}
\label{sec:controlled_ablation}

We further compare \method with controlled policy variants from the same design space that remove or alter uncertainty-gated admission and redundancy-aware eviction. \method achieves the best average rank, suggesting that the proposed signals from the future-information view are complementary. Full definitions and results are provided in Appendix~\ref{app:policy_variants}.
\section{Conclusion}
\label{sec:conclusion}

This work studies stream learning with TFMs from the perspective of bounded context management. Unlike classical stream learners that adapt by updating model states, a TFM adapts through the labeled examples retained as context. We formalize this through a future-information view, which connects context usefulness to the
information it provides about near-future queries. This leads to \method, a simple policy that combines recent support, entropy-gated admission, and redundancy-aware eviction. Across multiple streams and TFM backbones, our results suggest that context management is a central mechanism for making TFMs effective on evolving data streams.

\newpage
\bibliography{reference}
\bibliographystyle{icml2026}

\newpage
\appendix
\onecolumn
\section{Theory Details for the Future-Information View}
\label{app:theory}

This appendix provides the formal assumptions and proofs supporting Section~\ref{sec:future_utility}. We fix a stream step $t$, a current context $D_t$, and a newly observed candidate $z_t=(x_t,y_t)$. The
near-future feature distribution is
\[
    \mathcal P_{t,X}^+
    =
    \frac{1}{h}
    \sum_{s\in\mathcal H_t^+}
    \mathcal P_{s,X},
    \qquad
    \mathcal H_t^+=\{t+1,\ldots,t+h\}.
\]
For a future feature value $x'$, let $Y_{x'}$ denote its label random variable. For the current candidate feature $x_t$, we use $Y_{x_t}$ to denote the label random variable before its true label is observed.

\subsection{Recent-Window Approximation}
\label{app:recency_theory}

The recency signal is motivated by replacing the unavailable near-future feature distribution with the empirical distribution of recent examples.

Let $S_t$ be the recent window of labeled examples before time $t$, and let $\widehat{\mathcal P}_{S_t}$ be its empirical feature distribution. Replacing the unknown near-future feature distribution with the recent window gives the recent-window objective
\begin{equation}
    \widehat{\mathcal J}_{S_t}(D_t)
    =
    \mathbb E_{x_t\sim\widehat{\mathcal P}_{S_t}}
    \left[
        I(D_t;Y_{x_t}\mid x_t)
    \right].
    \label{eq:recent_information_objective}
\end{equation}

The following assumption states the condition under which the recent
window can act as a proxy for the near future: the two feature
distributions should not be too far apart.

\begin{assumption}[Recent-window local stability]
\label{assump:recent_window_stability}
At time $t$, the empirical feature distribution of the recent window is
close to the near-future feature distribution:
\begin{equation}
    W_1
    \left(
        \widehat{\mathcal P}_{S_t},
        \mathcal P_{t,X}^+
    \right)
    \le
    \rho_{S_t},
    \label{eq:local_stationarity_recent}
\end{equation}
where $W_1$ is the Wasserstein-1 distance in the representation space and
$\rho_{S_t}$ is the discrepancy between the recent-window and near-future
feature distributions at time $t$.
\end{assumption}

Under this condition, the following lemma shows that evaluating future
information on the recent window is close to evaluating it on the ideal
near-future distribution.

\begin{lemma}[Recent-window approximation]
\label{lem:recency_proxy}
For a fixed context $D_t$, suppose the query-wise information function
$x\mapsto I(D_t;Y_x\mid x)$ is $L$-Lipschitz in the representation space.
Under Assumption~\ref{assump:recent_window_stability},
\begin{equation}
    \left|
        \mathcal J_t(D_t)
        -
        \widehat{\mathcal J}_{S_t}(D_t)
    \right|
    \le
    L\rho_{S_t}.
    \label{eq:recency_proxy_bound}
\end{equation}
\end{lemma}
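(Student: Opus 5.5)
The plan is to apply Kantorovich--Rubinstein duality for the Wasserstein-1 distance directly. Fix the context $D_t$ and define the query-wise information function $f(x) = I(D_t;Y_x\mid x)$ on the representation space. Then both objectives are expectations of the same function under two different feature distributions:
\[
    \mathcal J_t(D_t)=\mathbb E_{x'\sim\mathcal P_{t,X}^+}[f(x')],
    \qquad
    \widehat{\mathcal J}_{S_t}(D_t)=\mathbb E_{x\sim\widehat{\mathcal P}_{S_t}}[f(x)].
\]
Here the first identity is Definition~\ref{def:future_information} and the second is Eq.~\eqref{eq:recent_information_objective}, with the dummy variable renamed. So the quantity to bound is simply $\left|\mathbb E_{\mathcal P_{t,X}^+}f-\mathbb E_{\widehat{\mathcal P}_{S_t}}f\right|$.

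Next, recall the dual form
\[
    W_1(\mu,\nu)=\sup_{\|g\|_{\mathrm{Lip}}\le 1}\left(\mathbb E_\mu g-\mathbb E_\nu g\right),
\]
where the supremum runs over $1$-Lipschitz functions in the representation metric. Since $f$ is $L$-Lipschitz by hypothesis, $g=f/L$ is $1$-Lipschitz (the case $L=0$ is trivial, since then $f$ is constant and the difference is zero). Applying the dual form to $g$ and to $-g$, which is also $1$-Lipschitz, bounds the absolute difference:
\[
    \left|\mathbb E_{\mathcal P_{t,X}^+}f-\mathbb E_{\widehat{\mathcal P}_{S_t}}f\right|
    \le L\,W_1\!\left(\widehat{\mathcal P}_{S_t},\mathcal P_{t,X}^+\right).
\]
Assumption~\ref{assump:recent_window_stability} then gives the bound $L\rho_{S_t}$, which is Eq.~\eqref{eq:recency_proxy_bound}.

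If I want to avoid invoking duality, I would use an elementary coupling argument instead. Take any coupling $\pi$ of the two distributions. Then
\[
    \left|\mathbb E f(x')-\mathbb E f(x)\right|
    \le \mathbb E_\pi|f(x')-f(x)|
    \le L\,\mathbb E_\pi\|\phi(x')-\phi(x)\|.
\]
Taking the infimum over couplings gives $L\,W_1$. This direction needs only the primal definition of $W_1$.

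The main obstacle is measure-theoretic bookkeeping rather than anything substantive. The expectations must be finite, so both distributions need finite first moments; this is automatic for the empirical $\widehat{\mathcal P}_{S_t}$ and is implicit in the assumption that $W_1<\infty$. In addition, $f$ must be well defined and measurable on the supports. I would state these as standing regularity conditions. I would also note that the Lipschitz property is taken in the same representation metric in which $W_1$ is measured, so the two hypotheses are compatible.
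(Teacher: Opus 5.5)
Your proof is correct and follows the paper's argument. Like the paper, you write both objectives as expectations of $f_{D_t}(x)=I(D_t;Y_x\mid x)$ under $\mathcal P_{t,X}^+$ and $\widehat{\mathcal P}_{S_t}$, bound their difference by $L\,W_1$ via Kantorovich--Rubinstein duality, and finish with Assumption~\ref{assump:recent_window_stability}. Your coupling variant is simply the elementary direction of that same inequality, which is the only direction needed here, and your regularity remarks (finite first moments, measurability) make explicit what the paper leaves implicit.
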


This bound is the formal reason for reserving a short-bank budget: if the
recent window is close to the near future, the information measured on
recent examples is a reliable proxy for near-future information.

\begin{proof}
Define the query-wise information function for the current context as
\[
    f_{D_t}(x)=I(D_t;Y_x\mid x).
\]
By assumption, $f_{D_t}$ is $L$-Lipschitz in the representation space. The
near-future objective and the recent-window objective can be written as
\[
    \mathcal J_t(D_t)
    =
    \mathbb E_{x\sim\mathcal P_{t,X}^+}[f_{D_t}(x)],
    \qquad
    \widehat{\mathcal J}_{S_t}(D_t)
    =
    \mathbb E_{x\sim\widehat{\mathcal P}_{S_t}}[f_{D_t}(x)].
\]
By the Kantorovich--Rubinstein duality for Wasserstein-1 distance,
\[
\left|
    \mathbb E_{x\sim P}[f_{D_t}(x)]
    -
    \mathbb E_{x\sim Q}[f_{D_t}(x)]
\right|
\le
L W_1(P,Q)
\]
for any two feature distributions $P$ and $Q$. Taking
$P=\mathcal P_{t,X}^+$ and $Q=\widehat{\mathcal P}_{S_t}$ gives
\[
    \left|
        \mathcal J_t(D_t)-\widehat{\mathcal J}_{S_t}(D_t)
    \right|
    \le
    L W_1(\widehat{\mathcal P}_{S_t},\mathcal P_{t,X}^+).
\]
Applying Assumption~\ref{assump:recent_window_stability} yields
Eq.~\eqref{eq:recency_proxy_bound}.
\end{proof}

\subsection{Marginal Future Information}
\label{app:marginal_theory}

The context-level objective $\mathcal J_t(D_t)$ scores an entire context,
but the online policy must decide whether a single new example should be
kept. The next lemma connects this instance-level decision to future-label
information.

For the newly observed candidate $z_t=(x_t,y_t)$, define its marginal
future information by
\[
    \Delta_t(z_t\mid D_t)
    =
    \mathcal J_t(D_t\cup\{z_t\})
    -
    \mathcal J_t(D_t).
\]

\begin{lemma}[Item contribution as future-label information]
\label{lem:marginal_gain_identity}
For the candidate $z_t=(x_t,y_t)$,
\begin{equation}
    \Delta_t(z_t\mid D_t)
    =
    \mathbb E_{x'\sim\mathcal P_{t,X}^+}
    \left[
        I(Y_{x_t};Y_{x'}\mid x_t,x',D_t)
    \right].
    \label{eq:marginal_gain_identity}
\end{equation}
\end{lemma}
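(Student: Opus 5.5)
The identity follows from the chain rule for mutual information, applied pointwise in the future query $x'$ and then averaged over $\mathcal P_{t,X}^+$. Since both $\mathcal J_t(D_t\cup\{z_t\})$ and $\mathcal J_t(D_t)$ are expectations over the same outer distribution, linearity of expectation gives
\[
\Delta_t(z_t\mid D_t)=\mathbb E_{x'\sim\mathcal P_{t,X}^+}\left[I(D_t\cup\{z_t\};Y_{x'}\mid x')-I(D_t;Y_{x'}\mid x')\right].
\]
It therefore suffices to show, for each fixed $x'$, that the bracketed difference equals $I(Y_{x_t};Y_{x'}\mid x_t,x',D_t)$.

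For fixed $x'$, I treat the augmented context $D_t\cup\{z_t\}$ as the pair of random objects $(D_t, z_t)$, with $z_t=(x_t,Y_{x_t})$. The chain rule for mutual information then gives
\[
I(D_t,z_t;Y_{x'}\mid x')=I(D_t;Y_{x'}\mid x')+I(z_t;Y_{x'}\mid x',D_t).
\]
Subtracting $I(D_t;Y_{x'}\mid x')$ leaves $I(z_t;Y_{x'}\mid x',D_t)$. I then split $z_t$ into its feature and label components and apply the chain rule once more:
\[
I(x_t,Y_{x_t};Y_{x'}\mid x',D_t)=I(x_t;Y_{x'}\mid x',D_t)+I(Y_{x_t};Y_{x'}\mid x_t,x',D_t).
\]
The first term vanishes because the candidate feature $x_t$ is treated as given (conditioned on, as in the statement), so it carries no information about $Y_{x'}$ on its own. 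Only the label term survives. Averaging over $x'$ then yields the claimed identity.

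The main obstacle is interpretive rather than computational. The definition uses a "pointwise" mutual information $I(D_t;Y_{x'}\mid x')$, yet the chain rule needs $D_t$, $z_t$ and $Y_{x'}$ to be jointly distributed random quantities. I would make the following explicit: the labels of the context, the candidate, and the query are random under the predictive (Bayesian/TFM) model, and the features $x_t, x'$ enter as conditioning. With this convention, the step $I(x_t;Y_{x'}\mid x',D_t)=0$ holds by construction and both chain-rule applications are legitimate. I would state this convention at the start of the proof so that the identity is exact rather than approximate.
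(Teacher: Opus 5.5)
Your proposal is correct and follows the paper's proof: linearity of expectation reduces $\Delta_t$ to the pointwise difference, and the chain rule for mutual information is then applied with $z_t$ represented by the fixed feature $x_t$ and the random label $Y_{x_t}$. Your two-stage chain rule, with the explicit observation that $I(x_t;Y_{x'}\mid x',D_t)=0$, spells out what the paper does in one step. Your stated convention (labels random, features as conditioning) is also the one the paper relies on implicitly.
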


This identity says that a candidate is useful when its label carries
information about labels of near-future queries. It is the bridge from the
context-level objective to an admission rule for individual examples.

\begin{proof}
By definition,
\begin{align*}
    \Delta_t(z_t\mid D_t)
    &=
    \mathcal J_t(D_t\cup\{z_t\})-\mathcal J_t(D_t) \\
    &=
    \mathbb E_{x'\sim\mathcal P_{t,X}^+}
    \left[
        I(D_t\cup\{z_t\};Y_{x'}\mid x')
        -
        I(D_t;Y_{x'}\mid x')
    \right].
\end{align*}
The candidate $z_t$ is represented by its feature value $x_t$ and label
random variable $Y_{x_t}$. By the chain rule for mutual information,
\[
    I(D_t\cup\{z_t\};Y_{x'}\mid x')
    =
    I(D_t;Y_{x'}\mid x')
    +
    I(Y_{x_t};Y_{x'}\mid x_t,x',D_t).
\]
Substituting this identity into the previous expression gives
Eq.~\eqref{eq:marginal_gain_identity}.
\end{proof}

\subsection{Entropy Lower Bound for Local Information}
\label{app:entropy_theory}

Lemma~\ref{lem:marginal_gain_identity} gives an ideal admission criterion, but it still depends on unknown future labels. We therefore relate this quantity to a tractable prediction-time signal: the entropy of the candidate before its label is observed.

Let $\mathcal N_t(z_t)\subseteq\mathcal X$ denote a measurable local region around $x_t$. We call it the effective future region of $z_t$, where the revealed label of $z_t$ can provide local evidence for near-future queries. Define its near-future mass as
\begin{equation}
\alpha_t(z_t)
=
\Pr_{x’\sim \mathcal P_{t,X}^+}
\left[
x’\in \mathcal N_t(z_t)
\right].
\label{eq:future_region_mass}
\end{equation}

We first require the candidate’s prediction-time uncertainty to be representative of the uncertainty in its local future region.

\begin{assumption}[Local entropy consistency]
\label{assump:local_entropy_consistency}
For the newly observed candidate $z_t=(x_t,y_t)$,
\begin{equation}
\mathbb E
\left[
H(Y_{x’}\mid x_t,x’,D_t)
\mid
x’\in \mathcal N_t(z_t)
\right]
\ge
H(Y_{x_t}\mid x_t,D_t)-\delta .
\label{eq:local_entropy_consistency}
\end{equation}
\end{assumption}

This condition says that the candidate should not be much more uncertain than the nearby future queries it is intended to support. The slack variable $\delta$ captures violations of this local entropy consistency.

We also require the revealed label of the candidate to be informative about labels in its local future region.

\begin{assumption}[Local label coherence]
\label{assump:local_label_coherence}
For the newly observed candidate $z_t=(x_t,y_t)$,
\begin{equation}
\mathbb E
\left[
H(Y_{x’}\mid Y_{x_t},x_t,x’,D_t)
\mid
x’\in \mathcal N_t(z_t)
\right]
\le
\epsilon .
\label{eq:local_label_coherence}
\end{equation}
\end{assumption}

This assumption does not require labels to be deterministic functions of the input. Rather, the residual term $\epsilon$ captures stochastic labels, unobserved variables, and class overlap within the local region. When nearby labels are noisy or only weakly coherent, $\epsilon$ becomes larger and the resulting lower bound becomes weaker.

Together, these two local conditions connect prediction-time entropy to the future-label information supplied by $z_t$.

\begin{theorem}[Entropy lower bound for local information]
\label{thm:entropy_local_utility}
For a candidate $z_t=(x_t,y_t)$ and its effective future region
$\mathcal N_t(z_t)$, the expected local information gain satisfies
\begin{align}
&\mathbb E_{x'\sim\mathcal P_{t,X}^+}
\left[
    \mathbf 1\{x'\in\mathcal N_t(z_t)\}
    I(Y_{x_t};Y_{x'}\mid x_t,x',D_t)
\right] \nonumber\\
&\qquad\ge
\alpha_t(z_t)
\left[
    H(Y_{x_t}\mid x_t,D_t)-\delta-\epsilon
\right].
\label{eq:entropy_local_bound}
\end{align}
\end{theorem}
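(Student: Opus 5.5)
The plan is to rewrite the indicator-weighted expectation as a conditional expectation over the effective future region, decompose the pointwise mutual information into a difference of conditional entropies, and then bound each entropy term with one of the two local assumptions.

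First, I pull out the region mass. Since the integrand is multiplied by $\mathbf 1\{x'\in\mathcal N_t(z_t)\}$, the left-hand side of Eq.~\eqref{eq:entropy_local_bound} equals
\[
\alpha_t(z_t)\,\mathbb E\left[I(Y_{x_t};Y_{x'}\mid x_t,x',D_t)\mid x'\in\mathcal N_t(z_t)\right],
\]
using the definition of $\alpha_t(z_t)$ in Eq.~\eqref{eq:future_region_mass}. If $\alpha_t(z_t)=0$, both sides vanish and the bound holds trivially. So I assume $\alpha_t(z_t)>0$, which makes the conditional expectation well defined.

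Second, for each fixed $x'$ I apply the standard identity
\[
I(Y_{x_t};Y_{x'}\mid x_t,x',D_t)=H(Y_{x'}\mid x_t,x',D_t)-H(Y_{x'}\mid Y_{x_t},x_t,x',D_t).
\]
Taking the conditional expectation over $x'\in\mathcal N_t(z_t)$ and using linearity splits the bracket into two terms. Assumption~\ref{assump:local_entropy_consistency} lower-bounds the first expected entropy by $H(Y_{x_t}\mid x_t,D_t)-\delta$. Assumption~\ref{assump:local_label_coherence} upper-bounds the second expected entropy by $\epsilon$. Subtracting gives
\[
\mathbb E\left[I(Y_{x_t};Y_{x'}\mid x_t,x',D_t)\mid x'\in\mathcal N_t(z_t)\right]\ge H(Y_{x_t}\mid x_t,D_t)-\delta-\epsilon .
\]
Multiplying through by $\alpha_t(z_t)\ge 0$ then yields Eq.~\eqref{eq:entropy_local_bound}.

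The only real obstacle is bookkeeping rather than the inequality itself. The splitting step requires both conditional expectations to be finite. This is automatic for discrete labels with $C$ classes, since every entropy lies in $[0,\log C]$. I also need to confirm that the conditioning in both assumptions uses exactly the same variables ($x_t$, $x'$, $D_t$, and $Y_{x_t}$ for the second term) as the entropy decomposition of the mutual information, so the two bounds combine without extra slack.
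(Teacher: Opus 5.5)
Your proposal is correct and follows essentially the same route as the paper: factor out $\alpha_t(z_t)$ to get a conditional expectation over $\mathcal N_t(z_t)$, write the conditional mutual information as a difference of two conditional entropies, then apply Assumption~\ref{assump:local_entropy_consistency} to the first term and Assumption~\ref{assump:local_label_coherence} to the second. Your explicit handling of the $\alpha_t(z_t)=0$ case (where the conditional expectations in the assumptions are undefined) and your finiteness remark are small rigor additions that the paper leaves implicit.
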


Theorem~\ref{thm:entropy_local_utility} provides the formal motivation for entropy-gated admission. It shows that, when local entropy consistency and local label coherence approximately hold, higher prediction-time entropy yields a larger lower-bound signal for the local future-information gain of $z_t$.

\begin{proof}
For any $x'\in\mathcal N_t(z_t)$, the definition of conditional mutual
information gives
\[
    I(Y_{x_t};Y_{x'}\mid x_t,x',D_t)
    =
    H(Y_{x'}\mid x_t,x',D_t)
    -
    H(Y_{x'}\mid Y_{x_t},x_t,x',D_t).
\]
Taking expectation over near-future feature values and restricting to the
effective future region,
\begin{align*}
&\mathbb E_{x'\sim\mathcal P_{t,X}^+}
\left[
    \mathbf 1\{x'\in\mathcal N_t(z_t)\}
    I(Y_{x_t};Y_{x'}\mid x_t,x',D_t)
\right] \\
&=
\alpha_t(z_t)
\mathbb E
\left[
    I(Y_{x_t};Y_{x'}\mid x_t,x',D_t)
    \mid x'\in\mathcal N_t(z_t)
\right] \\
&=
\alpha_t(z_t)
\mathbb E
\left[
    H(Y_{x'}\mid x_t,x',D_t)
    -
    H(Y_{x'}\mid Y_{x_t},x_t,x',D_t)
    \mid x'\in\mathcal N_t(z_t)
\right].
\end{align*}
By Assumption~\ref{assump:local_entropy_consistency},
\[
\mathbb E
\left[
    H(Y_{x'}\mid x_t,x',D_t)
    \mid x'\in\mathcal N_t(z_t)
\right]
\ge
H(Y_{x_t}\mid x_t,D_t)-\delta .
\]
By Assumption~\ref{assump:local_label_coherence},
\[
\mathbb E
\left[
    H(Y_{x'}\mid Y_{x_t},x_t,x',D_t)
    \mid x'\in\mathcal N_t(z_t)
\right]
\le
\epsilon .
\]
Combining these two inequalities proves Eq.~\eqref{eq:entropy_local_bound}.
\end{proof}

\subsection{Same-Class Evidence Redundancy}
\label{app:redundancy_theory}

The removal problem asks which stored example can be deleted with minimal
loss in future information. The following assumption formalizes the
intuition that close same-class examples provide overlapping evidence.

\begin{assumption}[Same-class evidence redundancy]
\label{assump:local_label_redundancy}
For two same-class context items $z_i=(x_i,y_i)$ and $z_j=(x_j,y_j)$ in
$D_t$ with $y_i=y_j$, there exists a nondecreasing function $\eta$ with
$\eta(r)\to0$ as $r\to0$ such that
\begin{equation}
    \mathcal J_t(D_t)-\mathcal J_t(D_t\setminus\{z_j\})
    \le
    \eta(\|\phi(x_i)-\phi(x_j)\|_2).
    \label{eq:redundancy_info_bound}
\end{equation}
\end{assumption}

This assumption says that if a stored example has a close same-class
neighbor in representation space, then removing it causes only a small
loss of future information. This is the formal motivation for removing
close same-class examples from the long bank when the budget is full.
\section{Additional Analyses}
\label{app:additional_analyses}

\subsection{Controlled Policy Variants}
\label{app:policy_variants}

We first evaluate controlled context-policy variants to isolate the design choices of \method. The variants summarized in Table~\ref{tab:policy_axes} test whether uncertainty is needed for retaining informative candidates, whether redundancy is needed for
low-loss removal, and whether entropy and same-class nearest-neighbor removal are the right instantiations of these signals. All variants use
the same TFM backbone and the same total context budget. We exclude POKER from this design-space evaluation due to its large stream length and
substantially longer runtime.

\begin{table}[h]
\centering
\caption{TFM context-policy variants for the design-space evaluation. They differ only in how they instantiate uncertainty-based retention and redundancy-aware removal.}
\label{tab:policy_axes}
\vspace{0.25em}
\setlength{\tabcolsep}{3pt}
\begin{tabular}{lcccc}
\toprule
Policy & Recency & Long bank & Retention signal & Removal rule \\
\midrule
\method{} & \checkmark & \checkmark & entropy & same-class NN \\
DualFIFO & \checkmark & \checkmark & -- & class-aware FIFO \\
Entropy-only & \checkmark & \checkmark & entropy & class-aware FIFO \\
Redundancy-only & \checkmark & \checkmark & -- & same-class NN \\
CURE-Margin & \checkmark & \checkmark & margin & same-class NN \\
CURE-GlobalNN & \checkmark & \checkmark & entropy & global NN \\
\bottomrule
\end{tabular}
\vspace{-0.6em}
\end{table}
\begin{table}[h]
\centering
\caption{Design-space ablation with prequential accuracy. Best results are bolded and second-best results are underlined. \method achieves the best average rank, showing that the theory-driven signals are more effective when used together than in isolation.}
\label{tab:ablation_variants}
\vspace{0.25em}
\small
\setlength{\tabcolsep}{4.5pt}
\renewcommand{\arraystretch}{0.9}
\begin{tabular}{lcccccc}
\toprule
Dataset
& \method{}
& DualFIFO
& Entropy-only
& Redundancy-only
& CURE-Margin
& CURE-GlobalNN \\
\midrule
NOAA
& $\underline{81.94}$
& 81.42
& 81.78
& 81.64
& 81.90
& $\mathbf{81.95}$ \\

METER
& $\mathbf{90.80}$
& 90.60
& 89.94
& 90.63
& 90.53
& $\underline{90.68}$ \\

RIALTO
& $\mathbf{92.04}$
& 91.72
& 91.36
& 91.97
& 92.00
& $\underline{92.02}$ \\

POSTURE-No8
& $\mathbf{62.10}$
& 61.85
& 61.97
& $\underline{62.01}$
& $\underline{62.01}$
& 61.88 \\

NOMAO
& $\mathbf{97.87}$
& 97.47
& 97.65
& 97.64
& $\underline{97.81}$
& 97.75 \\

AGR(A)
& $\mathbf{90.93}$
& 90.45
& $\underline{90.84}$
& 90.54
& 90.74
& 90.62 \\

\midrule
Avg. rank
& $\mathbf{1.17}$
& 5.50
& 4.33
& 4.08
& 3.08
& $\underline{2.83}$ \\
\bottomrule
\end{tabular}
\vspace{-0.5em}
\end{table}

\textbf{DualFIFO} is the FIFO-style dual-memory reference policy. It uses
the same short-bank and long-bank budgets as \method. New examples enter
the short bank, and short-bank overflow candidates are inserted into the
long bank. When the long bank exceeds its budget, DualFIFO selects the
most represented class and removes the oldest long-bank item from that
class. Thus, DualFIFO preserves recency and coarse class balance, but does
not use uncertainty or redundancy-aware removal.

\textbf{Entropy-only} isolates uncertainty-based retention. It uses the
same entropy-gated long-bank admission rule as \method, but keeps the
DualFIFO class-aware FIFO removal rule when the long bank overflows. This
tests whether uncertainty-based candidate filtering alone is sufficient
without redundancy-aware capacity management.

\textbf{Redundancy-only} isolates redundancy-aware removal. It admits
every short-bank overflow candidate into the long bank, but uses the
same-class nearest-neighbor removal rule from \method when the long bank
overflows. This tests whether removing duplicated same-class evidence is
sufficient without uncertainty-based candidate filtering.

\textbf{CURE-Margin} tests the choice of uncertainty score. It keeps the
same dual-memory structure and same-class redundancy-aware removal rule as
\method, but replaces predictive entropy with a top-two margin score.
For a prediction distribution $p_t=q_\theta(\cdot\mid x_t,D_t)$, let
$p_{t,(1)}$ and $p_{t,(2)}$ be the largest and second-largest class
probabilities. The margin-based uncertainty score is
\[
    h_{\mathrm{margin}}(z_t)
    =
    1-\left(p_{t,(1)}-p_{t,(2)}\right).
\]
A larger value indicates a smaller gap between the two most likely
classes.

\textbf{CURE-GlobalNN} tests the label-conditional nature of redundancy. It keeps entropy-gated admission but replaces same-class nearest-neighbor removal with class-agnostic nearest-neighbor removal. When the long bank overflows, it finds
\begin{equation*}
    (i^\ast,j^\ast)
    \in
    \arg\min_{\substack{i\ne j\\ z_i,z_j\in L_t}}
    \|\phi(x_i)-\phi(x_j)\|_2 .
    \label{eq:global_pair}
\end{equation*}
It then removes one endpoint using the same centroid-based tie-breaking rule as \method. This variant tests whether redundancy should be defined purely geometrically or conditioned on the label.

Table~\ref{tab:ablation_variants} compares \method with these controlled
policy variants on six streams. \method improves over DualFIFO on all
six streams, achieves the best result on five, and has the best average
rank. Because DualFIFO already uses the same dual-memory structure with
recency and class-coverage bias, these gains indicate the efficacy of the
future-information-guided design beyond FIFO retention.

The component variants show that uncertainty and redundancy are
complementary. Entropy-only helps on some streams, but can be unstable
when removal remains age-based. Redundancy-only consistently improves over
DualFIFO, but does not reach \method. The design-choice variants further
support our specific instantiation since predictive entropy generally performs
better than the top-two margin score, and same-class nearest-neighbor
removal gives a better average rank than class-agnostic removal.

\begin{table}[h]
\centering
\caption{Sensitivity of CURE to the entropy threshold $\tau$ with TabICLv2. Values are prequential accuracies.}
\label{tab:tau_sensitivity}
\small
\setlength{\tabcolsep}{4.5pt}
\begin{tabular}{lcccccc}
\toprule
Dataset & $\tau=0$ & $\tau=0.1$ & $\tau=0.2$ & $\tau=0.3$ & $\tau=0.4$ & $\tau=0.5$ \\
\midrule
NOAA        & 81.64 & 81.64 & 81.83 & 81.82 & \textbf{81.94} & 81.88 \\
METER       & 90.63 & \textbf{90.80} & 90.62 & 90.68 & 90.62 & 90.48 \\
RIALTO      & 91.97 & \textbf{92.04} & 91.98 & 92.00 & 92.02 & 91.96 \\
POSTURE-No8 & 62.01 & 62.03 & \textbf{62.10} & 62.03 & 62.03 & 62.08 \\
NOMAO & 97.64 & 97.71 & 97.82 & \textbf{97.87} & 97.86 & 97.86 \\
AGR(A) & 90.54 & 90.49 & 90.61 & 90.52 & \textbf{90.93} & 90.61 \\
\bottomrule
\end{tabular}
\end{table}

\subsection{Sensitivity to the Entropy Threshold $\tau$}
\label{app:tau_sensitivity}

Table~\ref{tab:tau_sensitivity} studies the sensitivity of \method to
the entropy threshold $\tau$. The $\tau=0$ column disables
uncertainty-gated admission and therefore corresponds to the
Redundancy-only variant. Moving from $\tau=0$ to a positive threshold
improves the best accuracy on all streams, indicating that the uncertainty
gate contributes beyond redundancy-aware eviction alone.

At the same time, \method is not sensitive to a narrowly tuned
threshold. Across positive thresholds
$\tau\in\{0.1,0.2,0.3,0.4,0.5\}$, the largest within-dataset accuracy
difference is only $0.39$ percentage points. Thus, uncertainty-gated
admission is useful, while the method remains stable over a broad range of
positive thresholds.

\begin{table}[h]
\centering
\caption{Average per-step runtime over seven streams. Values are in seconds.}
\label{tab:runtime_summary}
\small
\begin{tabular}{lcccc}
\toprule
Method & Total & Fit & Predict & Evict \\
\midrule
CURE     & 0.0283 & 0.0044 & 0.0226 & 0.0013 \\
DualFIFO & 0.0259 & 0.0040 & 0.0214 & 0.0005 \\
\bottomrule
\end{tabular}
\end{table}

\subsection{Efficiency Analysis}
\label{app:efficiency}

Table~\ref{tab:runtime_summary} reports the average per-step runtime of
\method and DualFIFO, decomposed into context fitting, prediction, and
eviction. The reported ``Total'' column sums the measured core components
used by the streaming prediction loop.

\method introduces additional computation over DualFIFO because it
performs entropy-gated long-bank admission and redundancy-aware eviction,
whereas DualFIFO updates the long bank using FIFO-style rules.
Nevertheless, the absolute overhead is small. The average total step time
increases from $0.0259$ seconds to $0.0283$ seconds, an increase of about
$0.0024$ seconds per example. The eviction component itself takes only
$0.0013$ seconds per step on average for \method, while prediction takes
$0.0226$ seconds. Thus, most of the runtime remains dominated by the
frozen TFM context-fitting and prediction calls, not by the
redundancy-aware removal logic. This suggests that the additional policy
logic in \method is practical for the evaluated streaming setting.
\section{Related Work}
\label{app:related_work}

\paragraph{Classical stream learning.}
Supervised stream learning is commonly studied under the test-then-train prequential protocol, where each example must be predicted before its
label is observed. Classical baselines address this setting by updating the learner state.
Hoeffding-tree methods such as VFDT and EFDT maintain node-level sufficient statistics and use statistical tests to decide when to grow or revise tree splits~\citep{domingos2000mining,manapragada2018extremely}.
Ensemble methods such as Leveraging Bagging, BOLE, Adaptive Random Forests, and Streaming Random Patches maintain multiple online learners
and adapt them through resampling, randomization, and drift-aware replacement mechanisms~\citep{bifet2010leveraging,de2016boosting, gomes2017adaptive,gomes2019streaming}. Recent stream-learning work further studies dynamic ensemble diversification, online concept-drift detection, and neural stream classifiers~\citep{abadifard2023dyned,wan2024online,su2024elastic}. These approaches adapt by changing model states, whereas our work studies adaptation through the retained context of a TFM.

\paragraph{Tabular foundation models.}
Tabular foundation models are in-context predictors that adapt to new datasets by conditioning on labeled examples, rather than by performing dataset-specific training. This paradigm follows prior-data fitted networks, where a transformer is pretrained to approximate Bayesian
posterior predictive inference from context examples~\citep{
muller2021transformers}. TabPFN introduced this idea for tabular classification using synthetic prior-generated tables~\citep{hollmann2022tabpfn}, and recent TFMs such as TabICL, LimiX, and TabDPT extend the paradigm with improved architectures, training procedures, or pretraining data~\citep{qu2025tabicl,zhang2025limix,ma2024tabdpt}. Because TFMs expose the labeled context as their adaptation interface, they are naturally suited to bounded-memory stream learning.

\section{Algorithmic Details}
\label{app:algorithm_details}

This appendix gives the procedural form of the context update used by \method. The short bank $S$ implements the recency signal, while the long bank $L$ stores older examples selected by entropy-gated admission and redundancy-aware eviction. The stored score $h(z)$ is the prediction-time entropy of $z$, computed when $z$ was first observed and later used when the sample overflows from the short bank.

\begin{algorithm}[h]
\caption{\method memory update}
\label{alg:harmony_update}
\begin{algorithmic}[1]
\REQUIRE Short bank $S$, long bank $L$, new labeled sample $z=(x,y)$, entropy $h(z)$, capacities $B_S,B_L$, threshold $\tau$
\ENSURE Updated short and long banks $S,L$
\STATE $S \leftarrow S \cup \{z\}$
\IF{$|S| > B_S$}
    \STATE $z^+=(x^+,y^+) \leftarrow$ oldest sample in $S$
    \STATE $S \leftarrow S \setminus \{z^+\}$
    \IF{$|L| < B_L$ \OR $h(z^+) \ge \tau$}
        \STATE $L \leftarrow L \cup \{z^+\}$
    \ENDIF
\ENDIF
\IF{$|L| > B_L$}
    \STATE $c_{\max} \leftarrow \arg\max_c |\{(x_i,y_i)\in L:y_i=c\}|$
    \STATE $(i^*,j^*) \leftarrow
    \arg\min_{\substack{i\ne j\\ z_i,z_j\in L\\ y_i=y_j=c_{\max}}}
    \|\phi(x_i)-\phi(x_j)\|_2$
    \STATE $\mu^* \leftarrow
    \begin{cases}
    |S(c_{\max})|^{-1}\sum_{z_i\in S(c_{\max})}\phi(x_i), & |S(c_{\max})|>0,\\
    |S|^{-1}\sum_{z_i\in S}\phi(x_i), & |S(c_{\max})|=0
    \end{cases}$
    \STATE $u^* \leftarrow \arg\max_{u\in\{i^*,j^*\}}\|\phi(x_u)-\mu^*\|_2$
    \STATE $L \leftarrow L\setminus\{z_{u^*}\}$
\ENDIF
\STATE \textbf{return} $S,L$
\end{algorithmic}
\end{algorithm}

The nearest-pair search implements the redundancy signal: among the overrepresented class in the long bank, the closest same-class pair is treated as the most redundant evidence, and the item farther from the recent reference centroid is removed.
\section{Experimental Details}
\label{app:experimental_details}

This appendix provides additional details on datasets, TFM
backbones, classical stream-learning baselines, context-management policy variants, and evaluation settings.

\subsection{Datasets}
\label{app:datasets}

\begin{table}[h]
\centering
\caption{Dataset statistics after applying the current stream loader.}
\label{tab:dataset_stats}
\begin{tabular}{lrrrl}
\toprule
Dataset & Rows & Features & Classes & Balance \\
\midrule
NOAA & 18,159 & 8 & 2 & imbalanced \\
METER & 22,948 & 96 & 10 & balanced \\
RIALTO & 82,250 & 27 & 10 & balanced \\
POSTURE-No8 & 163,477 & 3 & 10 & highly imbalanced \\
POKER & 829,201 & 10 & 10 & extremely imbalanced \\
NOMAO & 34,465 & 118 & 2 & imbalanced \\
AGR(A) & 30,000 & 9 & 2 & synthetic abrupt drift \\
\bottomrule
\end{tabular}
\end{table}

We evaluate seven datasets in total with six real-world data streams and one synthetic abrupt-drift stream. For real-world streams, we preserve the original row order and do not shuffle examples. Each file is loaded as a headerless CSV file, then the last column is treated as the class label while all preceding columns are used as input features. The \textsc{METER} and \textsc{POSTURE} loaders drop one malformed row before constructing the stream. A synthetic stream AGR(A) is generated from the Agrawal stream generator. We generated 30,000 examples with 9 features and 2 classes. The concept function changes abruptly at steps 7,500, 15,000, and 22,500 following the sequence $(0,3,6,9)$, with $10\%$ feature perturbation.

Table~\ref{tab:dataset_stats} summarizes the datasets used in our
experiments. We categorize class balance using the imbalance ratio
\[
    \mathrm{IR}
    =
    \frac{\max_c n_c}{\min_c n_c}.
\]
A dataset is categorized as balanced if $\mathrm{IR}\le 1.1$,
imbalanced if $1.1 < \mathrm{IR}\le 10$, highly imbalanced if
$10 < \mathrm{IR}\le 100$, and extremely imbalanced if
$\mathrm{IR}>100$.

\subsection{Backbones}
\label{app:backbones}

We evaluate \method with four frozen tabular foundation model
backbones: TabICL-v2, TabPFN-v2.5, TabDPT-v1, and LimiX-v1 to test the backbone-agnostic properties of \method.

\paragraph{TabICL-v2.}
TabICL was designed to scale tabular in-context learning beyond the expensive alternating row/column attention, using a two-stage column-then-row attention module to build row embeddings before efficient in-context prediction~\citep{qu2025tabicl}. TabICL-v2 further improves scalability and performance with a more diverse synthetic data engine and optimized pretraining protocols for both classification and regression~\citep{qu2026tabiclv2}. We use
TabICL-v2 as the primary backbone in the main experiments.

\paragraph{TabPFN-v2.5.}
TabPFN-v1 instantiated Prior-Data Fitted Networks for tabular classification. TabPFN-v2 introduced a stronger synthetic task distribution and an alternating-attention architecture over samples and features \citep{hollmann2025accurate}. TabPFN-v2.5 follows this line and scales it toward larger tabular contexts~\citep{grinsztajn2025tabpfn}.

\paragraph{TabDPT-v1.}
TabDPT targets the limitation of purely synthetic pretraining by training
tabular in-context learning architectures on real data with
self-supervised learning and retrieval~\citep{ma2024tabdpt}. We use TabDPT-v1 as the real-data-pretrained
backbone.

\paragraph{LimiX-v1.}
LimiX treats structured data as a joint distribution over variables and missingness, enabling classification, regression, imputation, and generation through query-based conditional prediction~\citep{zhang2025limix}. It is pretrained with masked joint-distribution modeling under an episodic context-conditional objective. We use the LimiX-16M checkpoint and the default no-retrieval classification configuration.

\subsection{Baselines}
\label{app:baselines}

We compare \method against various classical online stream-learning baselines. These baselines are implemented using MOA 24.07.

\paragraph{VFDT.}
The Very Fast Decision Tree (VFDT), also known as the Hoeffding Tree, is a milestone incremental decision-tree algorithm for high-speed data streams. It uses Hoeffding bounds to decide when the best split attribute is statistically reliable, enabling decision-tree induction with constant memory and constant time per example in the idealized setting
\citep{domingos2000mining}.

\paragraph{EFDT.}
The Extremely Fast Decision Tree (EFDT), or Hoeffding Anytime Tree, modifies Hoeffding Tree by allowing splits to be made earlier and later revised when better split choices become statistically supported. Compared with the conservative split policy of VFDT, EFDT is more statistically efficient and often obtains stronger prequential accuracy at modest additional computational cost~\citep{manapragada2018extremely}.

\paragraph{ARF.}
Adaptive Random Forest (ARF) is a strong ensemble baseline for evolving data stream classification. It combines online resampling with multiple incremental tree learners and uses adaptive operators to replace underperforming trees under concept drift~\citep{gomes2017adaptive}.

\paragraph{SRP.}
Streaming Random Patches (SRP) is an ensemble method designed for evolving streams that combines online bagging with random subspaces. Unlike methods that only randomize samples or only randomize features, SRP can jointly exploit instance resampling and feature-subspace diversity~\citep{gomes2019streaming}.

\paragraph{Leveraging Bagging.}
Leveraging Bagging extends online bagging with stronger randomization to increase ensemble diversity. It was proposed for evolving data streams as a simple but effective bagging variant with additional randomness and drift-aware mechanisms~\citep{bifet2010leveraging}.

\paragraph{BOLE.}
The Boosting-like Online Learning Ensemble (BOLE) adapts boosting-style ideas to concept-drifting data streams. It modifies online boosting mechanisms to better handle changing distributions and maintain ensemble performance under drift~\citep{de2016boosting}.

\subsection{Evaluation Details}
\label{app:evaluation_details}
\paragraph{Protocol.}
All methods are evaluated under prequential protocol. At step $t$, the learner predicts the label of $x_t$ using its current model or context, then receives the true label $y_t$, and finally updates its model or context using the labeled example $z_t=(x_t,y_t)$. We use a warm-up period of 100 stream steps and report prequential accuracy only after
warm-up.

\paragraph{Hardware.}
All TFM-based streaming experiments are run on a single NVIDIA H200 GPU.

\paragraph{MOA baseline hyperparameter grid.}
For all MOA baseline runs, we fix the split confidence to $10^{-7}$ and use \texttt{NBAdaptive} leaf prediction.

For ensemble baselines, ARF, BOLE, LevBag, and SRP, we fix the ensemble size to $90$ and grid search over grace period and tie threshold:
\[
    g\in\{100,400,1000\},
    \qquad
    t\in\{0.01,0.05,0.1\}.
\]
Thus, each ensemble baseline is evaluated with $3\times 3=9$
configurations per dataset. For tree baselines, EFDT and VFDT, we use the same grid over grace period and tie threshold:
\[
    g\in\{100,400,1000\},
    \qquad
    t\in\{0.01,0.05,0.1\}.
\]
This also gives $9$ configurations per tree baseline. 

\paragraph{Selected Hyperparameters}

Table~\ref{tab:selected_hyperparameters} reports the hyperparameters selected for the main results in Table~\ref{tab:main_results}. For \method, the selected value is the entropy threshold $\tau$. For the MOA baselines, $s$ denotes the ensemble size, $g$ denotes the grace period, and $t$ denotes the tie threshold. The ensemble size is fixed to $s=90$ for ARF, BOLE, LevBag, and SRP. EFDT and VFDT are single-tree methods and therefore do not use $s$.

\begin{table}[t]
\centering
\caption{Selected hyperparameters for the main comparison. For MOA baselines, $s$ is ensemble size, $g$ is grace period, and $t$ is tie threshold.}
\label{tab:selected_hyperparameters}
\vspace{0.25em}
\scriptsize
\setlength{\tabcolsep}{3pt}
\renewcommand{\arraystretch}{0.95}
\resizebox{\linewidth}{!}{
\begin{tabular}{lccccccc}
\toprule
Dataset & \method $\tau$ & ARF & BOLE & LevBag & SRP & EFDT & VFDT \\
\midrule
NOAA
& 0.4
& $s90,g400,t0.1$
& $s90,g100,t0.1$
& $s90,g100,t0.1$
& $s90,g400,t0.1$
& $g400,t0.1$
& $g400,t0.1$ \\

METER
& 0.1
& $s90,g400,t0.1$
& $s90,g1000,t0.1$
& $s90,g100,t0.1$
& $s90,g100,t0.1$
& $g1000,t0.01$
& $g400,t0.1$ \\

RIALTO
& 0.1
& $s90,g100,t0.1$
& $s90,g100,t0.1$
& $s90,g100,t0.1$
& $s90,g100,t0.1$
& $g100,t0.1$
& $g400,t0.1$ \\

POSTURE-No8
& 0.2
& $s90,g100,t0.1$
& $s90,g100,t0.1$
& $s90,g100,t0.1$
& $s90,g100,t0.1$
& $g400,t0.1$
& $g100,t0.1$ \\

POKER
& 0.3
& $s90,g100,t0.1$
& $s90,g100,t0.1$
& $s90,g100,t0.1$
& $s90,g100,t0.1$
& $g100,t0.1$
& $g400,t0.1$ \\

NOMAO
& 0.3
& $s90,g100,t0.1$
& $s90,g1000,t0.01$
& $s90,g400,t0.1$
& $s90,g100,t0.1$
& $g100,t0.05$
& $g100,t0.1$ \\

AGR(A)
& 0.4
& $s90,g100,t0.1$
& $s90,g100,t0.1$
& $s90,g100,t0.1$
& $s90,g100,t0.01$
& $g100,t0.1$
& $g400,t0.1$ \\
\bottomrule
\end{tabular}
}
\vspace{-0.5em}
\end{table}
\section{Additional Visualizations}
\label{app:visualizations}

\subsection{Prequential Accuracy Trajectories}
\label{app:prequential_trajectories}

Figures~\ref{fig:noaa_cure_vs_moa}--\ref{fig:agr_a_cure_vs_moa} show prequential accuracy trajectories of CURE and the MOA baselines and verify that CURE's advantage is persistent over time. On METER, RIALTO, POSTURE-No8, and POKER, CURE quickly separates from the classical baselines and maintains a large gap throughout most of the stream. On NOAA, NOMAO, and AGR(A), where the best classical baselines are closer, CURE still remains among the top trajectories and avoids the stronger degradation observed for some tree and ensemble methods. This supports the interpretation that the gains in Table~\ref{tab:main_results} reflect stable stream-level behavior rather than a final-score artifact.

\begin{figure}[h]
    \centering
    \includegraphics[width=\linewidth]{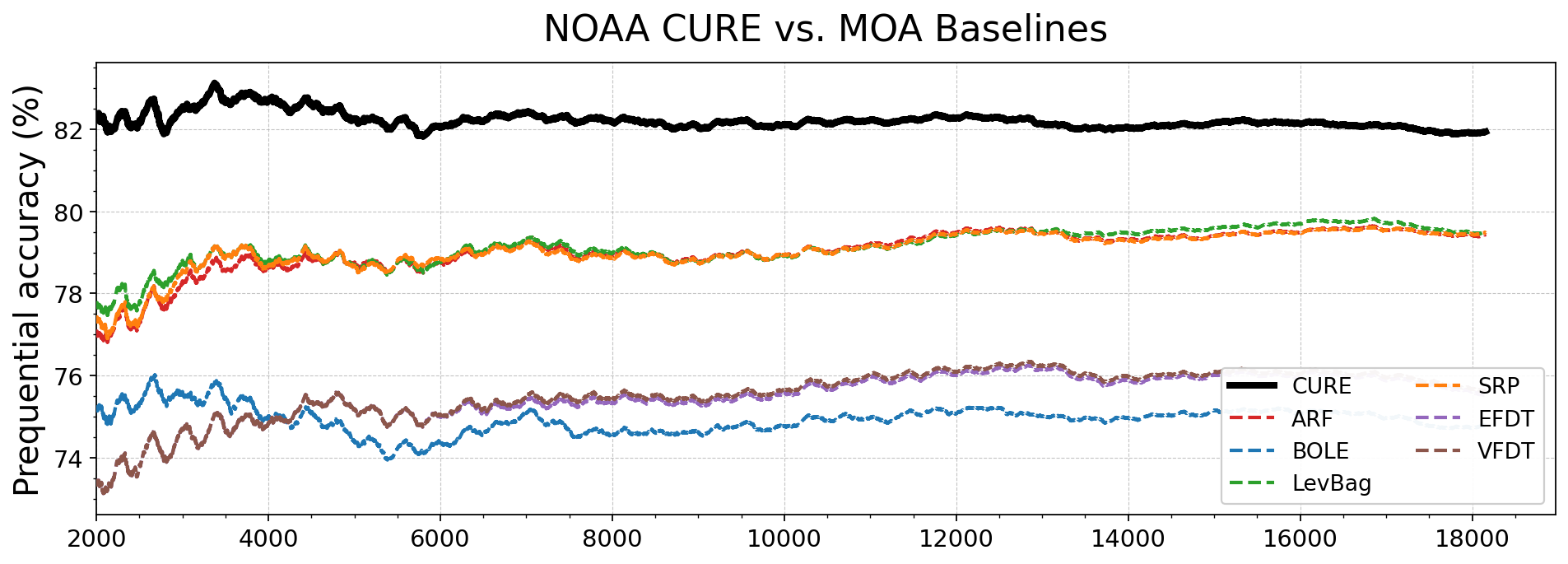}
    \caption{Prequential accuracy trajectories of CURE and MOA baselines on NOAA.}
    \label{fig:noaa_cure_vs_moa}
\end{figure}

\begin{figure}[h]
    \centering
    \includegraphics[width=\linewidth]{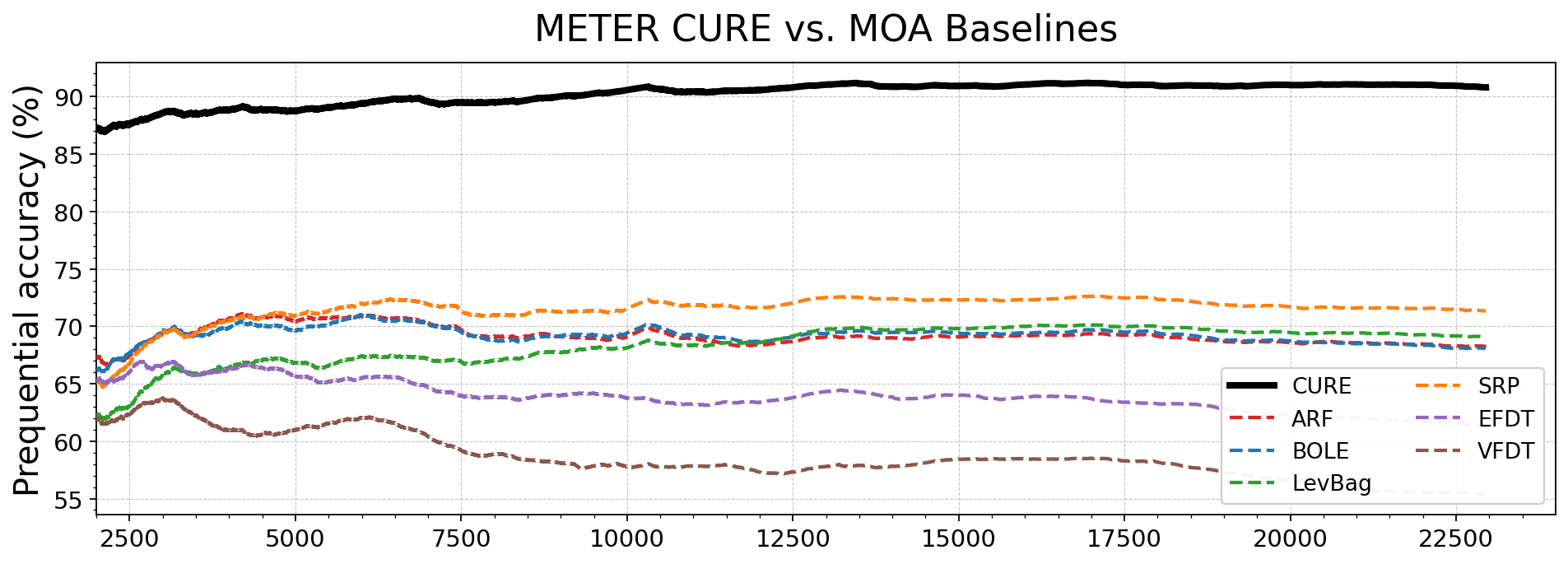}
    \caption{Prequential accuracy trajectories of CURE and MOA baselines on METER.}
    \label{fig:meter_cure_vs_moa}
\end{figure}

\begin{figure}[h]
    \centering
    \includegraphics[width=\linewidth]{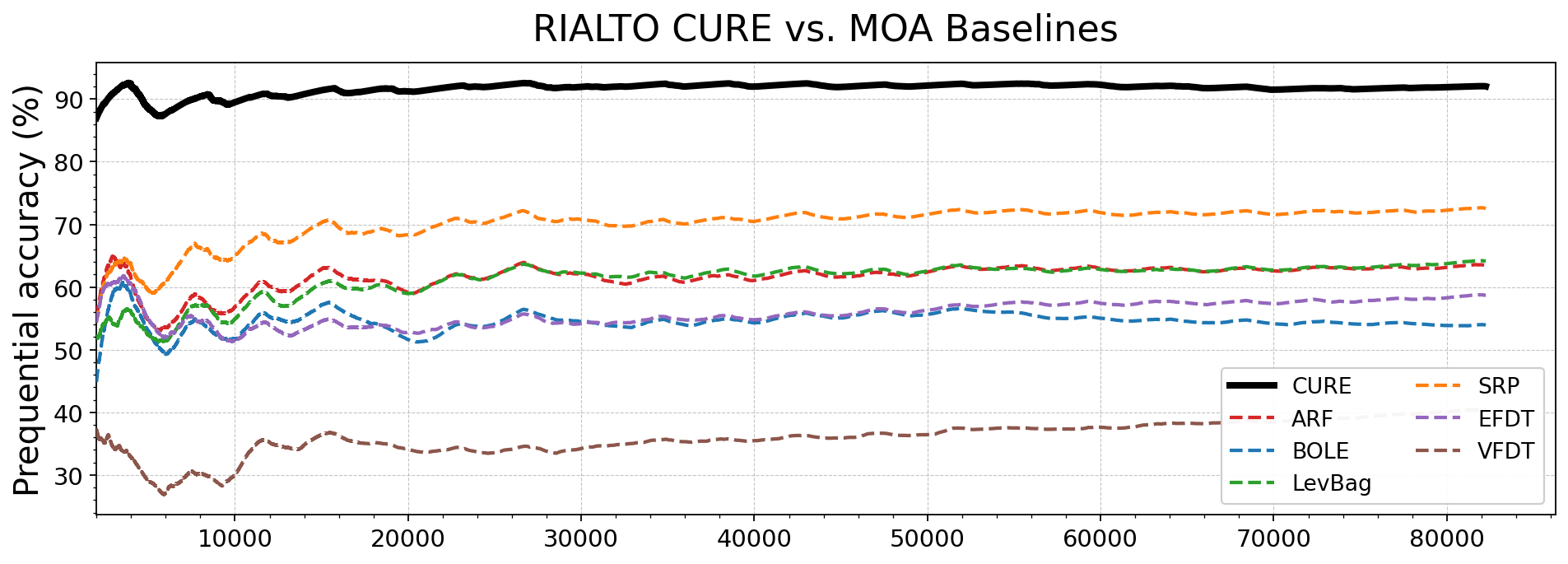}
    \caption{Prequential accuracy trajectories of CURE and MOA baselines on RIALTO.}
    \label{fig:rialto_cure_vs_moa}
\end{figure}

\begin{figure}[h]
    \centering
    \includegraphics[width=\linewidth]{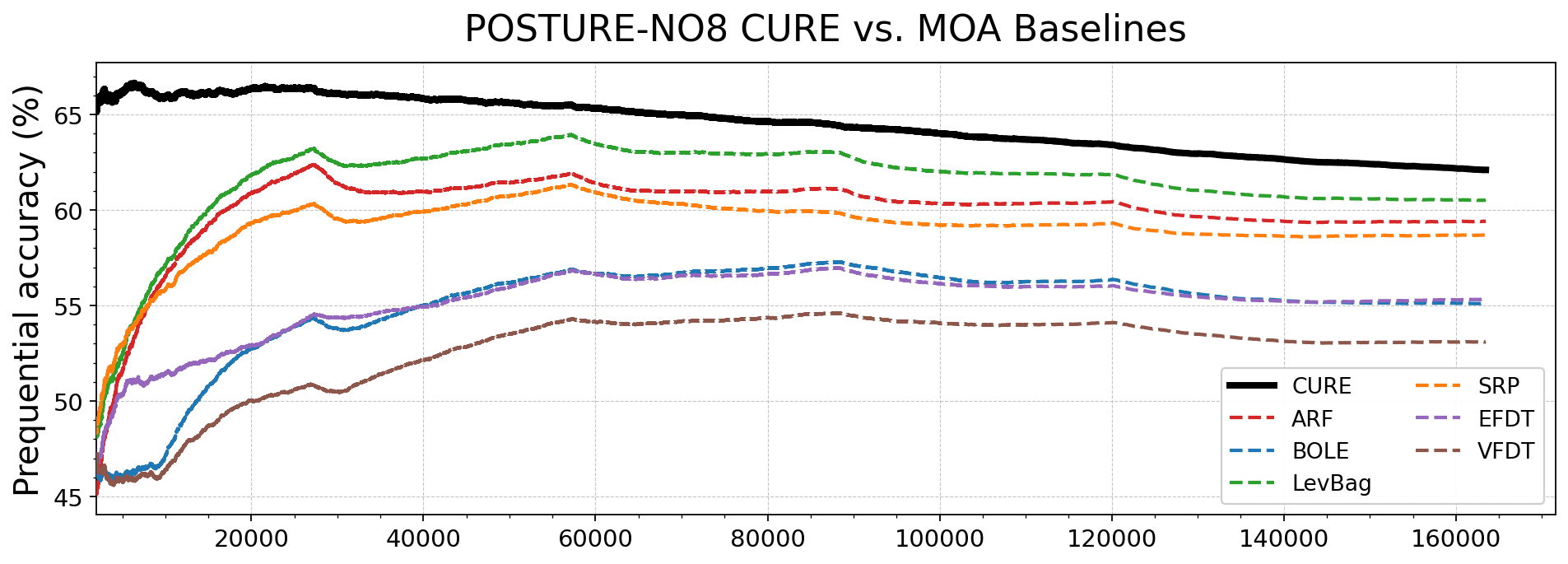}
    \caption{Prequential accuracy trajectories of CURE and MOA baselines on POSTURE-No8.}
    \label{fig:posture_no8_cure_vs_moa}
\end{figure}

\begin{figure}[h]
    \centering
    \includegraphics[width=\linewidth]{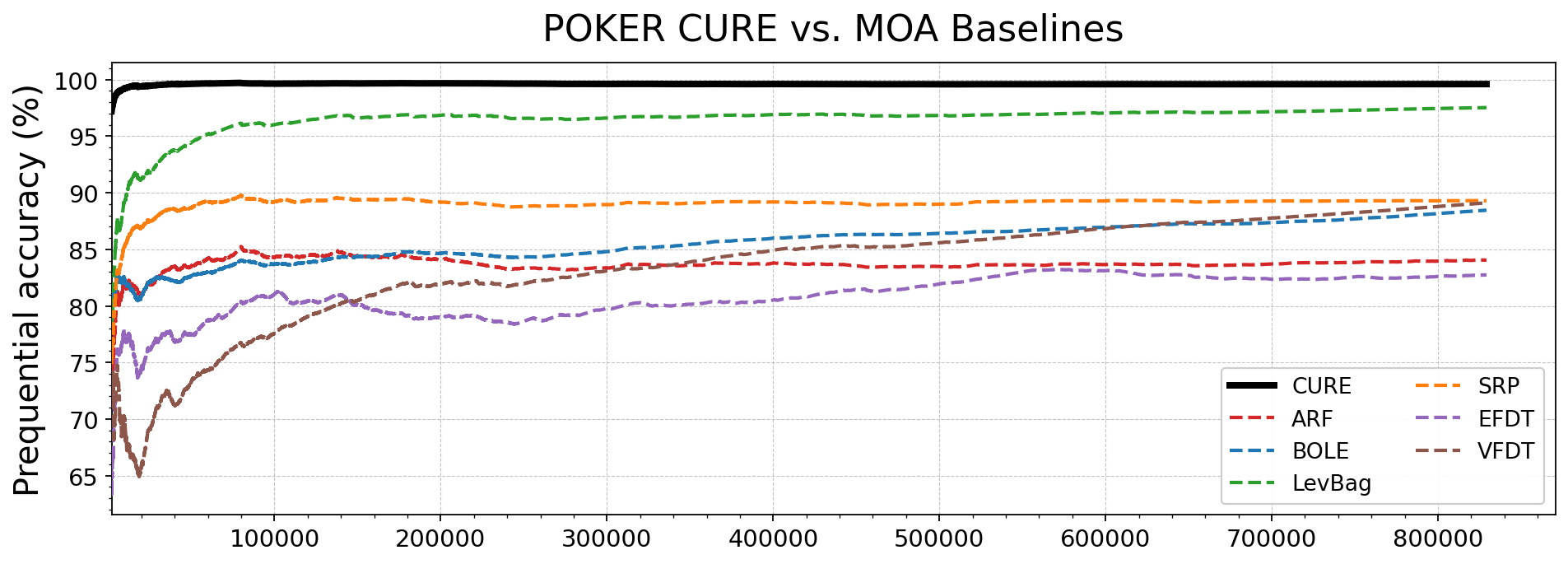}
    \caption{Prequential accuracy trajectories of CURE and MOA baselines on POKER.}
    \label{fig:poker_cure_vs_moa}
\end{figure}

\begin{figure}[h]
    \centering
    \includegraphics[width=\linewidth]{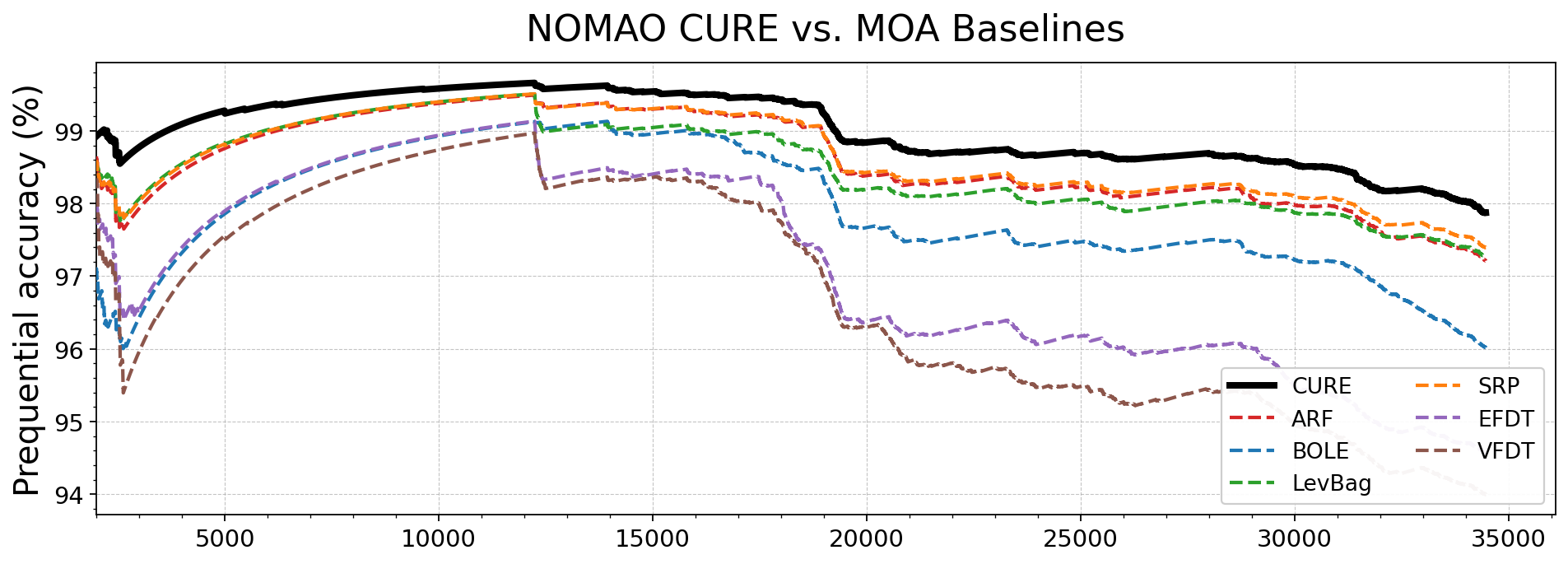}
    \caption{Prequential accuracy trajectories of CURE and MOA baselines on NOMAO.}
    \label{fig:nomao_cure_vs_moa}
\end{figure}

\begin{figure}[h]
    \centering
    \includegraphics[width=\linewidth]{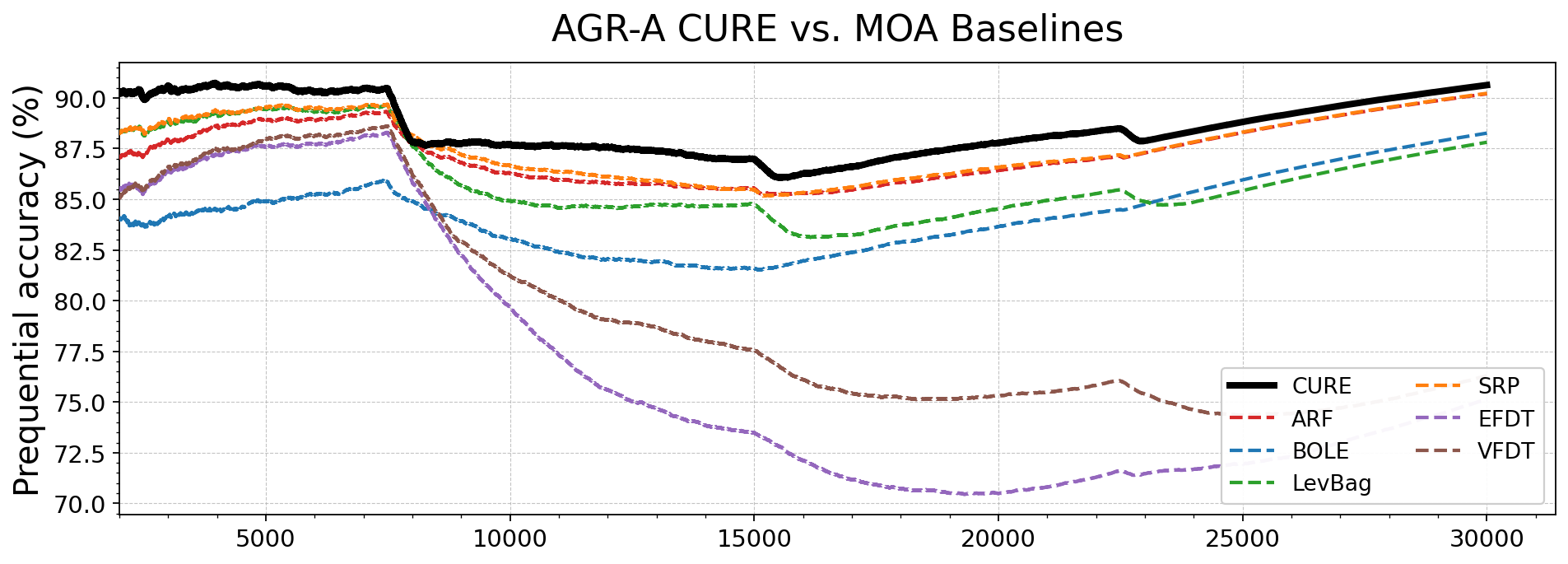}
    \caption{Prequential accuracy trajectories of CURE and MOA baselines on AGR(A).}
    \label{fig:agr_a_cure_vs_moa}
\end{figure}


\end{document}